\def\eqref#1{equation~\ref{#1}}
\def\1{\bm{1}}
\DeclareMathAlphabet{\mathsfit}{\encodingdefault}{\sfdefault}{m}{sl}
\SetMathAlphabet{\mathsfit}{bold}{\encodingdefault}{\sfdefault}{bx}{n}
\DeclareMathOperator*{\argmax}{arg\,max}
\DeclareMathOperator*{\argmin}{arg\,min}
\definecolor{Lightgray}{RGB}{235,235,235}
\DeclarePairedDelimiterX{\infdivx}[2]{(}{)}{%
  #1\;\delimsize\|\;#2%
}
\newtheorem{theorem}{Theorem}
\newtheorem{lemma}[theorem]{Lemma}
\newtheorem{definition}[theorem]{Definition}
\newtheorem{remark}[theorem]{Remark}
\newtheorem{example}[theorem]{Example}
\definecolor{c1}{RGB}{239,71,111}
\definecolor{c2}{RGB}{250,131,52}
\title{Excessive Invariance Causes Adversarial Vulnerability}
\author{
  J\"orn-Henrik Jacobsen \textsuperscript{1$\ast$},\;
  Jens Behrmann\textsuperscript{1,2},\;
  Richard Zemel\textsuperscript{1},\;
  Matthias Bethge\textsuperscript{3}\; \\
  \textsuperscript{1}{Vector Institute and University of Toronto} \\ 
  \textsuperscript{2}{University of Bremen, Center for Industrial Mathematics} \\  
  \textsuperscript{3}{University of T\"ubingen} \\
  \texttt{$\ast$j.jacobsen@vectorinstitute.ai}
  \vspace{-6mm}
}
\begin{document}

\maketitle

\begin{abstract}
    Despite their impressive performance, deep neural networks exhibit striking failures on out-of-distribution inputs. One core idea of adversarial example research is to reveal neural network errors under such distribution shifts. We decompose these errors into two complementary sources: sensitivity and invariance. We show deep networks are not only \textit{too sensitive} to task-irrelevant changes of their input, as is well-known from $\epsilon$-adversarial examples, but are also \textit{too invariant} to a wide range of task-relevant changes, thus making vast regions in input space vulnerable to adversarial attacks. We show such excessive invariance occurs across various tasks and architecture types. On MNIST and ImageNet one can manipulate the class-specific content of almost any image without changing the hidden activations. We identify an insufficiency of the standard cross-entropy loss as a reason for these failures. Further, we extend this objective based on an information-theoretic analysis so it encourages the model to consider all task-dependent features in its decision. This provides the first approach tailored explicitly to overcome excessive invariance and resulting vulnerabilities.
\end{abstract}

\section{Introduction}

\begin{figure}[H]
    \centering
    \vspace{-2mm}
    \includegraphics[width=1.\textwidth]{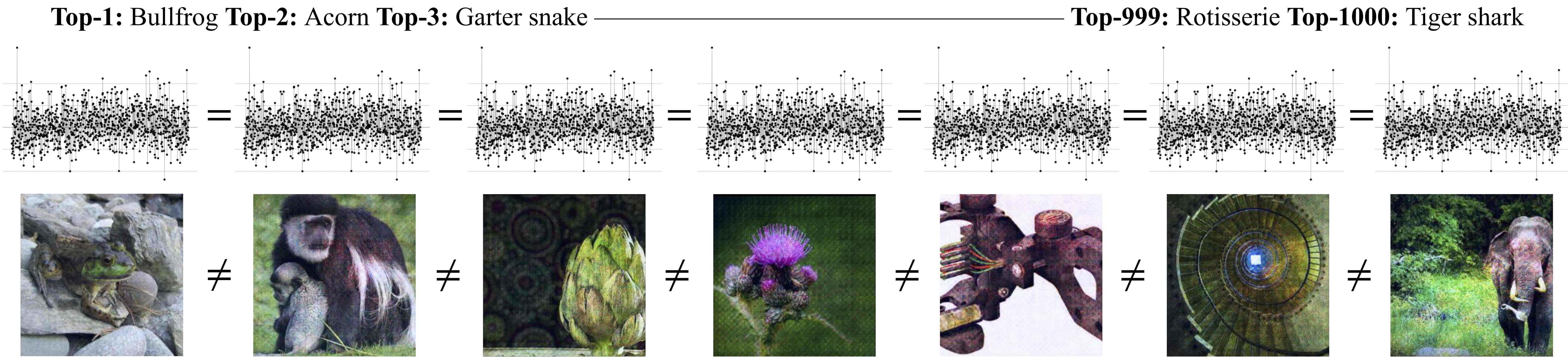}
    \vspace{-4mm}
    \caption{All images shown cause a competitive ImageNet-trained network to output the \textit{exact same} probabilities over all 1000 classes (logits shown above each image). The leftmost image is from the ImageNet validation set; all other images are constructed such that they match the non-class related information of images taken from other classes (for details see section \ref{metamericsampling}). 
    The excessive invariance revealed by this set of adversarial examples demonstrates that the logits contain only a small fraction of the information perceptually relevant to humans for discrimination between the classes.
   \label{fig:imagenetmetamers}
    }
\end{figure}

Adversarial vulnerability is one of the most iconic failure cases of modern machine learning models~\citep{szegedy2013intriguing} and a prime example of their weakness in out-of-distribution generalization. It is particularly striking that under i.i.d. settings deep networks show superhuman performance on many tasks~\citep{lecun2015deep}, while tiny targeted shifts of the input distribution can cause them to make unintuitive mistakes. The reason for these failures and how they may be avoided or at least mitigated is an active research area~\citep{schmidt2018adversarially,gilmer2018adversarial,bubeck2018adversarial}. 

So far, the study of adversarial examples has mostly been concerned with the setting of small perturbation, or $\epsilon$-adversaries~\citep{harnessing_adversarial,madry2017towards,raghunathan2018certified}. 

Perturbation-based adversarial examples are appealing because they allow to quantitatively measure notions of adversarial robustness~\citep{brendel2018adversarial}. However, recent work argued that the perturbation-based approach is unrealistically restrictive and called for the need of generalizing the concept of adversarial examples to the unrestricted case, including any input crafted to be misinterpreted by the learned model~\citep{GANexamples,unrestrictedBrown}. 
Yet, settings beyond $\epsilon$-robustness are hard to formalize~\citep{GilmerMotivating2018}. 

We argue here for an alternative, complementary viewpoint on the problem of adversarial examples. Instead of focusing on transformations erroneously crossing the decision-boundary of classifiers, we focus on excessive invariance as a major cause for adversarial vulnerability. To this end, we introduce the concept of \text{invariance-based} adversarial examples and show that class-specific content of almost any input can be changed arbitrarily without changing activations of the network, as illustrated in figure \ref{fig:imagenetmetamers} for ImageNet. This viewpoint opens up new directions to analyze and control crucial aspects underlying vulnerability to unrestricted adversarial examples.

The invariance perspective suggests that adversarial vulnerability is a consequence of narrow learning, yielding classifiers that rely only on few highly predictive features in their decisions. This has also been supported by the observation that deep networks strongly rely on spectral statistical regularities~\citep{jo2017measuring}, or stationary statistics~\citep{gatys2017texture} to make their decisions, rather than more abstract features like shape and appearance. We hypothesize that a major reason for this excessive invariance can be understood from an information-theoretic viewpoint of cross-entropy, which maximizes a bound on the mutual information between labels and representation, giving no incentive to explain all class-dependent aspects of the input. This may be desirable in some cases, but to achieve truly general understanding of a scene or an object, machine learning models have to learn to successfully separate essence from nuisance and subsequently generalize even under shifted input distributions.

Our contributions:
\begin{itemize}
    \item We identify excessive invariance underlying striking failures in deep networks and formalize the connection to adversarial examples.
    \item We show invariance-based adversarial examples can be observed across various tasks and types of deep network architectures.
    \item We propose an invertible network architecture that gives explicit access to its decision space, enabling class-specific manipulations to images while leaving all dimensions of the representation seen by the final classifier invariant. 
    \item From an information-theoretic viewpoint, we identify the cross-entropy objective as a major reason for the observed failures. Leveraging invertible networks, we propose an alternative objective that provably reduces excessive invariance and works well in practice.
\end{itemize}

\section{Two Complementary Approaches to Adversarial Examples}

In this section, we define pre-images and establish a link to adversarial examples. 

\begin{definition}[Pre-images / Invariance]
\label{def:metamericExample}
Let $F:\mathbb{R}^d \rightarrow \mathbb{R}^C$ be a neural network, $F=f_L \circ \cdots \circ f_1$ with layers $f_i$ and let $F_i$ denote the network up to layer $i$. Further, let $D:\mathbb{R}^d\rightarrow \{1, \ldots, C\}$ be a classifier with $D = \argmax_{k=1,\ldots,C} softmax(F(x))_k$. Then, for input $x \in \mathbb{R}^d$, we define the following pre-images
\begin{enumerate}[(i)]
    \item $i$-th Layer pre-image: $\{x^* \in \mathbb{R}^d \mid F_i(x^*) = F_i(x)\}$
    \item Logit pre-image: $\{x^* \in \mathbb{R}^d \mid F(x^*) = F(x)\}$
    \item Argmax pre-image: $\{x^* \in \mathbb{R}^d \mid D(x^*) = D(x)\}$,
\end{enumerate}
where (i) $\subset$ (ii) $\subset$ (iii) by the compositional nature of $D$.\\
Moreover, the (sub-)network is \textbf{invariant} to perturbations $\Delta x$ which satisfy $x^* = x + \Delta x$.
\end{definition} 

\begin{figure}[H]
\vspace{-.9cm}
\noindent\begin{minipage}{.33\linewidth}
\begin{equation}
\begin{split}
  \textcolor{c1}{\rightarrow} \, \text{Invaria}&\text{nce-based:}    \\
  F(x) & = F(\tilde{x}) \\
   y & \neq \tilde{y}
\end{split}
\end{equation}
\end{minipage}%
\begin{minipage}{.3\linewidth}
    \includegraphics[width=1.3\textwidth]{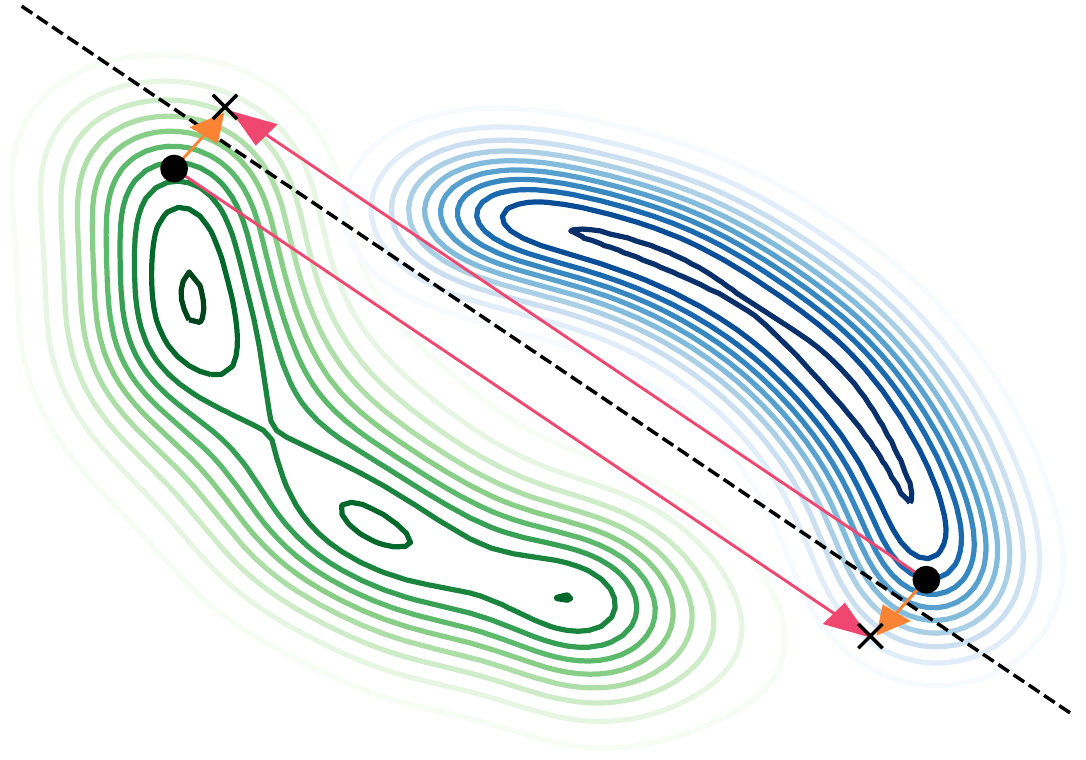}
    
\end{minipage}%
\begin{minipage}{.33\linewidth}
\begin{equation}
\begin{split}
  \textcolor{c2}{\rightarrow} \, \text{Perturb}&\text{ation-based:}  \\
  F(x) & \neq F(\tilde{x}) \\
  y & = \tilde{y}
\end{split}
\end{equation}
\end{minipage}
\caption{Connection between (1) invariance-based (long pink arrow) and (2) perturbation-based adversarial examples (short orange arrow). Class distributions are shown in green and blue; dashed line is the decision-boundary of a classifier. All adversarial examples can be reached either by crossing the decision-boundary of the classifier via perturbations, or by moving within the pre-image of the classifier to mis-classified regions. The two viewpoints are \textit{complementary} to one another and highlight that adversarial vulnerability is not only caused by excessive \textit{sensitivity} to semantically meaningless perturbations, but also by excessive \textit{insensitivity} to semantically meaningful transformations.}
\label{fig:metExAdvEx}
\end{figure}

Non-trivial pre-images (pre-images containing more elements than input $x$) after the $i$-th layer occur if the chain $f_i \circ \cdots \circ f_1$ is not injective, for instance due to subsampling or non-injective activation functions like ReLU \citep{jensRelu}. This accumulated invariance can become problematic if not controlled properly, as we will show in the following. 

We define perturbation-based adversarial examples by introducing the notion of an oracle (e.g., a human decision-maker or the unknown input-output function considered in learning theory):
\begin{definition}[Perturbation-based Adversarial Examples]
\label{def:advExam}
A \textbf{Perturbation-based adversarial example} $x^* \in \mathbb{R}^d$ of $x \in \mathbb{R}^d$ fulfills:
\begin{enumerate}[(i)]
\item Perturbation of decision: $D(x^*) \neq o(x^*)$ and $D(x) \neq D(x^*)$, where $D: \mathbb{R}^d \rightarrow \{1, \dots ,C\}$ is the \textbf{classifier} and $o: \mathbb{R}^d \rightarrow \{1, \dots ,C\}$ is the \textbf{oracle}.
\item Created by adversary: $x^* \in \mathbb{R}^d$ is created by an algorithm $\mathcal{A}: \mathbb{R}^d \rightarrow \mathbb{R}^d$ with $x \mapsto x^*$.
\end{enumerate}
Further, \textbf{$\epsilon$-bounded} adversarial ex. $x^*$ of $x$ fulfill $\|x - x^*\| < \epsilon$, $\|\cdot\|$ a norm on $\mathbb{R}^d$ and $\epsilon > 0$.
\end{definition}

Usually, such examples are constructed as $\epsilon$-bounded adversarial examples \citep{harnessing_adversarial}. However, as our goal is to characterize general invariances of the network, we do not restrict ourselves to bounded perturbations.

\begin{definition}[Invariance-based Adversarial Examples]
Let $G$ denote the $i$-th layer, logits or the classifier (Definition \ref{def:metamericExample}) and let $x^*\neq x$ be in the $G$ pre-image of $x$ and and $o$ an oracle (Definition \ref{def:advExam}). Then, an \textbf{invariance-based adversarial example} fulfills $o(x) \neq o(x^*)$, while $G(x) = G(x^*)$ (and hence $D(x) = D(x^*)$).
\end{definition}
Intuitively, adversarial perturbations cause the output of the classifier to change while the oracle would still consider the new input $x^*$ as being from the original class. Hence in the context of $\epsilon$-bounded perturbations, the classifier is \textit{too sensitive to task-irrelevant changes}. On the other hand, movements in the pre-image leave the classifier invariant. If those movements induce a change in class as judged by the oracle, we call these invariance-based adversarial examples. In this case, however, the classifier is \textit{too insensitive to task-relevant changes}. In conclusion, these two modes are complementary to each other, whereas both constitute failure modes of the learned classifier.

When not restricting to $\epsilon$-perturbations, perturbation-based and invariance-based adversarial examples yield the same input $x^*$ via
\begin{align}
    x^* &= x_1 + \Delta x_1, \quad D(x^*) \neq D(x_1), \quad o(x^*) = o(x_1) \\
    x^* &= x_2 + \Delta x_2, \quad D(x^*) = D(x_2) , \quad o(x^*) \neq o(x_2),
\end{align}
with different reference points $x_1$ and $x_2$, see Figure \ref{fig:metExAdvEx}. Hence, the \textbf{key difference is the change of reference}, which allows us to approach these failure modes from different directions. To connect these failure modes with an intuitive understanding of variations in the data, we now introduce the notion of invariance to nuisance and semantic variations, see also \citep{achille2017emergence}.

\begin{definition}[Semantic/ Nuisance perturbation of an input]
Let $o$ be an oracle (Definition \ref{def:advExam}) and $x\in \mathbb{R}^d$. Then, a perturbation $\Delta x$ of an input $x\in \mathbb{R}^d$ is called \textbf{semantic}, if $o(x)\neq o(x + \Delta x)$ and \textbf{nuisance} if $o(x) = o(x + \Delta x)$.
\end{definition}
For example, such a nuisance perturbation could be a translation or occlusion in image classification. Further in Appendix \ref{app:advSpheres}, we discuss the synthetic example called \textit{Adversarial Spheres} from \citep{gilmer2018adversarial}, where nuisance and semantics can be explicitly formalized as rotation and norm scaling.

\subsection{Using Bijective Networks to Analyze Excessive Invariance}
\label{metamericsampling}

\begin{wrapfigure}{R}{0.27\textwidth}
\vspace{-4mm}
\centering
\includegraphics[width=1.\textwidth]{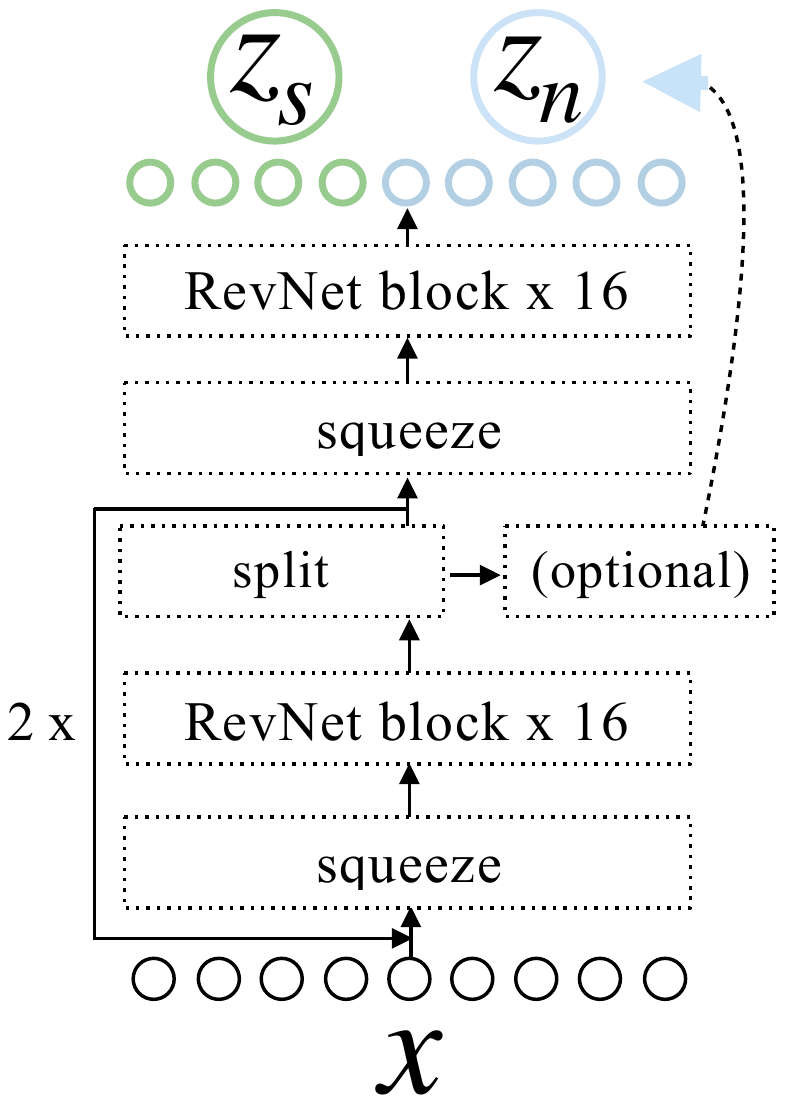}
\caption{The fully invertible RevNet, a hybrid of Glow and iRevNet with simple readout structure. $z_s$ represents the logits and $z_n$ the nuisance. \label{fig:fiRevnet}}
\end{wrapfigure}

As invariance-based adversarial examples manifest themselves in changes which do not affect the output of the network $F$, we need a generic approach that gives us access to the discarded nuisance variability. While feature nuisances are intractable to access for general architectures (see comment after Definition \ref{def:metamericExample}), invertible classifiers only remove nuisance variability in their final projection~\citep{jacobsen2018irevnet}. For $C < d$, we denote the classifier as $D: \mathbb{R}^d \rightarrow \{1,...,C\}$. Our contributions in this section are: \textbf{(1)} Introduce an invertible architecture with a simplified readout structure, allowing to exactly visualize manipulations in the hidden-space, \textbf{(2)} Propose an analytic attack based on this architecture allowing to analyze its decision-making, \textbf{(3)} Reveal striking invariance-based vulnerability in competitive classifiers.

\textbf{Bijective classifiers with simplified readout.}\, We build deep networks that give access to their decision space by removing the final linear mapping onto the class probes in invertible RevNet-classifiers and call these networks fully invertible RevNets. The fully invertible RevNet classifier can be written as $D_{\theta} = \argmax_{k=1,\ldots,C}\, softmax(F_\theta(x)_k)$, where $F_\theta$ represents the bijective network. We denote $z = F_\theta(x)$, $z_s = z_{1,...,C}$ as the logits (semantic variables) and $z_n = z_{C+1,...,d}$ as the nuisance variables ($z_n$ is not used for classification). In practice we choose the first C indices of the final $z$ tensor or apply a more sophiscticated DCT scheme (see appendix \ref{app:traindetails}) to set the subspace $z_s$, but other choices work as well. The architecture of the network is similar to iRevNets~\citep{jacobsen2018irevnet} with some additional Glow components like actnorm~\citep{kingma2018glow}, squeezing, dimension splitting and affine block structure~\citep{dinh2016density}, see Figure \ref{fig:fiRevnet} for a graphical description. As all components are common in the bijective network literature, we refer the reader to Appendix \ref{app:traindetails} for exact training and architecture details. Due to its simple readout structure, the resulting invertible network allows to qualitatively and quantitatively investigate the task-specific content in nuisance and logit variables. Despite this restriction, we achieve performance on par with commonly-used baselines on MNIST and ImageNet, see Table \ref{tab:irevnet_mnist_imagenet} and Appendix \ref{app:traindetails}.

\begin{table}[H]
\begin{centering}
\vspace{-.05cm}
\begin{tabularx}{\textwidth}{SSSSSS} 
     {\textbf{\% Error}} & {\textbf{fi-RevNet48}(Ours)}   & {\textbf{VGG19}}& {\textbf{ResNet18}} & {\textbf{ResNet50}}& {\textbf{iRevNet300}}   \\ \midrule
    \text{ILSVRC2012 Val Top1} & 29.50  & 28.70 & 30.43 & 24.70& 26.70  \\ 
    \text{ILSVRC2012 Val Top5} & 11.30 & 9.90 & 10.80 & 7.89& \text{-}   \\ 
\end{tabularx}
\caption{The table shows error rates on the ILSVRC-2012 validation set of our proposed fully invertible RevNet compared to a VGG~\citep{simonyan2014very} and two ResNet~\citep{he2016deep} variants, as well as an iRevNet~\citep{jacobsen2018irevnet} with a non-invertible final projection onto the logits. Our proposed fully invertible RevNet performs roughly on par with others.}
\label{tab:irevnet_mnist_imagenet}
\end{centering}
\end{table}

\textbf{Analytic attack.}\, To analyze the trained models, we can sample elements from the logit pre-image by computing $x_{met}=F^{-1}(z_s,\tilde{z}_n)$, where $z_s$ and $\tilde{z}_n$ are taken from two different inputs. We term this heuristic \textit{metameric sampling}. The samples would be from the true data distribution if the subspaces would be factorized as $P(z_s,z_n)=P(z_s)P(z_n)$. Experimentally we find that logit metamers are revealing adversarial subspaces and are visually close to natural images on ImageNet. Thus, metameric sampling gives us an analytic tool to inspect dependencies between semantic and nuisance variables without the need for expensive and approximate optimization procedures.

\begin{figure}
\vspace{-.9cm}
\begin{center}
\includegraphics[width=.8\textwidth]{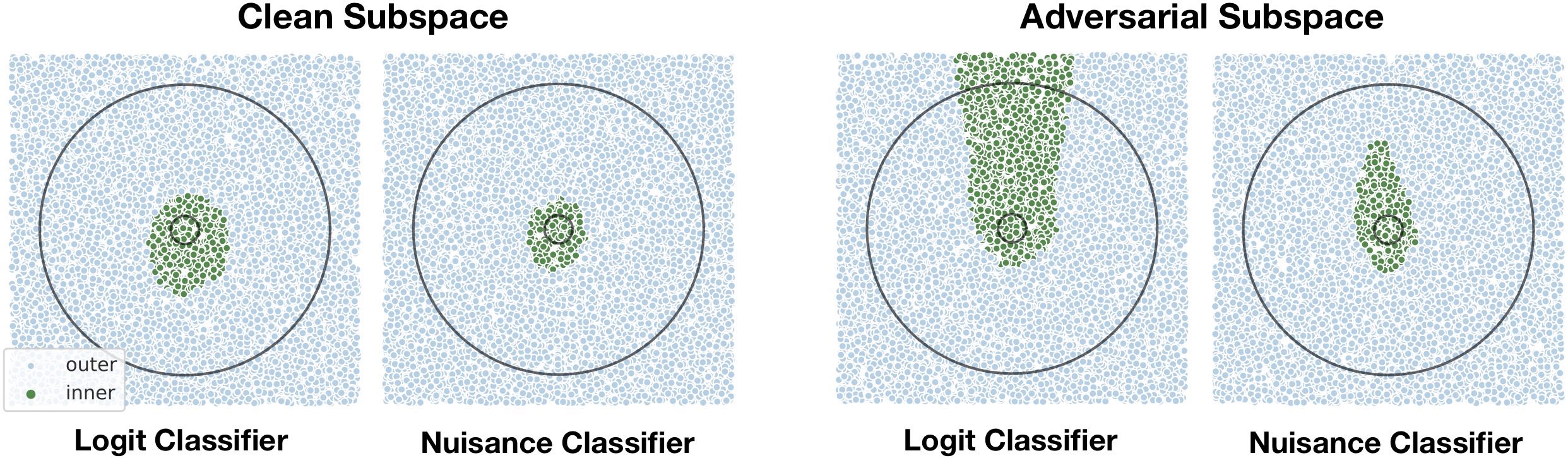}
\vspace{-.1cm}
\caption{Left: Decision-boundaries in 2D subspace spanned by two random data points $x_1,x_2$. Right: Decision-boundaries in 2D subspace spanned by random datapoint $x$ and metamer $x_{met}$.} 
\label{Fig:spheres}
\end{center}
\end{figure}
\textbf{Attack on adversarial spheres.}\, First, we evaluate our analytic attack on the synthetic spheres dataset, where the task is to classify samples as belonging to one out of two spheres with different radii. We choose the sphere dimensionality to be $d=100$ and the radii: $R_1=1$, $R_2 =10$. By training a fully-connected fully invertible RevNet, we obtain 100\% accuracy. After training we visualize the decision-boundaries of the original classifier $D$ and a posthoc trained classifier on $z_n$ (nuisance classifier), see Figure \ref{Fig:spheres}. We densely sample points in a 2D subspace, following~\cite{gilmer2018adversarial}, to visualize two cases: 1) the decision-boundary on a 2D plane spanned by two randomly chosen data points, 2) the decision-boundary spanned by metameric sample $x_{met}$ and reference point $x$. In the metameric sample subspace we identify excessive invariance of the classifier. Here, it is possible to move any point from the inner sphere to the outer sphere without changing the classifiers predictions. However, this is not possible for the classifier trained on $z_n$. Most notably, the visualized failure is not due to a lack of data seen during training, but rather due to excessive invariance of the original classifier $D$ on $z_s$. Thus, the nuisance classifier on $z_n$ does not exhibit the same adversarial vulnerability in its subspace.

\begin{figure}[H]
    \centering
    \includegraphics[width=1.\textwidth]{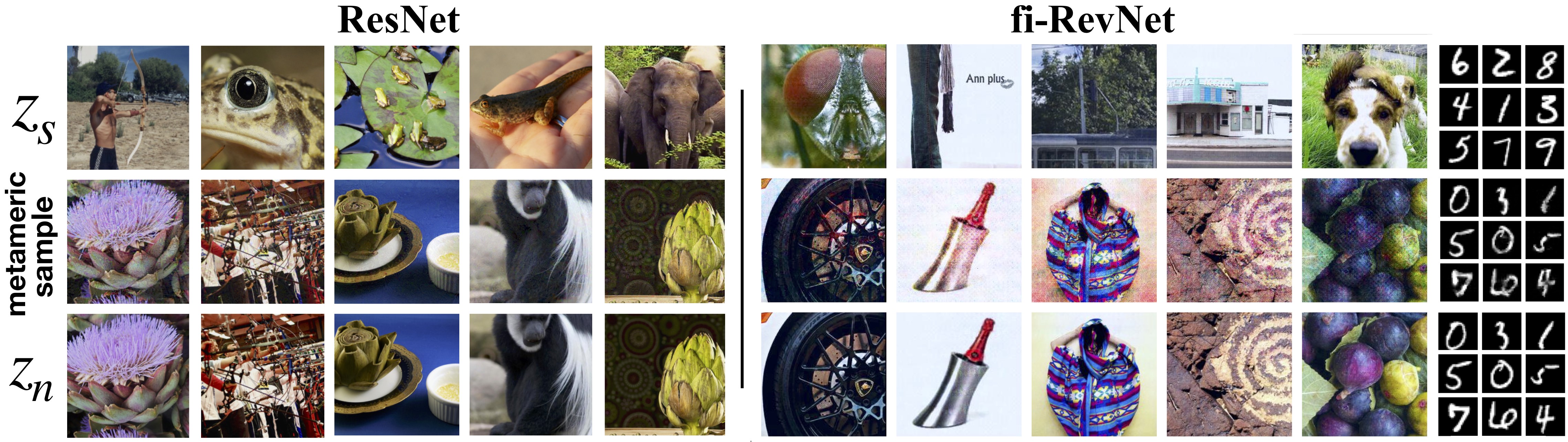}
    \caption{Each column shows three images belonging together. Top row are source images from which we sample the logits, middle row are logit metamers and bottom row images from which we sample the nuisances. Top row and middle row have the same (approximately for ResNets, exactly for fully invertible RevNets) logit activations. Thus, it is possible to change the image content completely without changing the 10- and 1000-dimensional logit vectors respectively. This highlights a striking failure of classifiers to capture all task-dependent variability.}
    \label{fig:imagenet_mnist}
\end{figure}

\textbf{Attack on MNIST and ImageNet.}\, After validating its potential to uncover adversarial subspaces, we apply metameric sampling to fully invertible RevNets trained on MNIST and Imagenet, see Figure \ref{fig:imagenet_mnist}. The result is striking, as the nuisance variables $z_n$ are dominating the visual appearance of the logit metamers, making it possible to attach any semantic content to any logit activation pattern. Note that the entire 1000-dimensional feature vector containing probabilities over all ImageNet classses remains unchanged by any of the transformations we apply. To show our findings are not a particular property of bijective networks, we attack an ImageNet trained ResNet152 with a gradient-based version of our metameric attack, also known as feature adversaries \citep{sabour2015adversarial}. The attack minimizes the mean squared error between a given set of logits from one image to another image (see appendix \ref{app:optimMet} for details). The attack shows the same failures for non-bijective models. This result highlights the general relevance of our finding and poses the question of the origin of this excessive invariance, which we will analyze in the following section.

\section{Overcoming Insufficiency of Crossentropy-based Information-maximization}
\label{ice}

In this section we identify why the cross-entropy objective does not necessarily encourage to explain all task-dependent variations of the data and propose a way to fix this. As shown in figure \ref{Fig:spheres}, the nuisance classifier on $z_n$ uses task-relevant information not captured by the logit classifier $D_{\theta}$ on $z_s$ (evident by its superior performance in the adversarial subspace). 

We leverage the simple readout-structure of our invertible network and turn this observation into a formal explanation framework using information theory: Let $(x,y)\sim \mathcal{D}$ with labels $y \in \{0,1\}^C$. Then the goal of a classifier can be stated as maximizing the mutual information \citep{cover} between semantic features $z_s$ (logits) extracted by network $F_\theta$ and labels $y$, denoted by $I(y;z_s)$. 

\textbf{Adversarial distribution shift.}\, As the previously discussed failures required to modify input data from distribution $\mathcal{D}$, we introduce the concept of an \textit{adversarial distribution shift $\boldsymbol{\mathcal{D}_{Adv} \neq \mathcal{D}}$} to formalize these modifications. Our first assumptions for $\mathcal{D}_{Adv}$ is $I_{\mathcal{D}_{Adv}}(z_n; y) \leq I_{\mathcal{D}}(z_n; y)$. Intuitively, the nuisance variables $z_n$ of our network do not become more informative about $y$. Thus, the distribution shift may reduce the predictiveness of features encoded in $z_s$, but does not introduce or increase the predictive value of variations captured in $z_n$. Second, we assume $I_{\mathcal{D}_{Adv}}(y; z_s |z_n) \leq I_{\mathcal{D}_{Adv}}(y; z_s)$, which corresponds to positive or zero interaction information, see e.g. \citep{Ghassami17interaction}. While the information in $z_s$ and $z_n$ can be redundant in this assumption, synergetic effects where conditioning on $z_n$ increase the mutual information between $y$ and $z_s$ are excluded.

Bijective networks $F_\theta$ capture all variations by design which translates to information preservation $I(y;x) = I(y;F_\theta(x))$, see \citep{kraskow}. Consider the reformulation
\begin{align}
\label{eq:infoSplit}
    I(y;x) = I(y;F_\theta(x)) = I(y;z_s,z_n) =  I(y;z_s) + I(y;z_n|z_s) = I(y;z_n) + I(y;z_s|z_n) 
\end{align}
by the chain rule of mutual information \citep{cover}, where $I(y;z_n|z_s)$ denotes the conditional mutual information. Most strikingly, \eqref{eq:infoSplit} offers two ways forward: 
\begin{enumerate}
    \item Direct increase of $I(y;z_s)$
    \item Indirect increase of $I(y;z_s| z_n)$ via decreasing $I(y;z_n)$.
\end{enumerate}
Usually in a classification task, only $I(y;z_s)$ is increased actively via training a classifier. While this approach is sufficient in most cases, expressed via high accuracies on training and test data, it may fail under $\mathcal{D}_{Adv}$. This highlights why cross-entropy training may not be sufficient to overcome excessive semantic invariance. However, by leveraging the bijection $F_\theta$ we can minimize the unused information $I(y; z_n)$ using the intuition of a nuisance classifier.

\begin{definition}[Independence cross-entropy loss]
\label{def:minMaxloss}
Let $F_{\theta}: \mathbb{R}^d \rightarrow \mathbb{R}^d$ a bijective network with parameters $\theta \in \mathbb{R}^{p_1}$ and $\tilde{F}_{\theta}(x) = softmax(F_{\theta}(x)_{1, \dots,C})$. Furthermore, let $D_{\theta_{nc}}:\mathbb{R}^{d- C} \rightarrow [0,1]^{C}$ be the nuisance classifier with $\theta_{nc} \in \mathbb{R}^{p_2}$. Then, the independence cross-entropy loss is defined as:
\begin{align*}
    \min_\theta \max_{\theta_{nc}} \mathcal{L}_{iCE}(\theta,\theta_{nc}) =  \underbrace{\sum_{i=1}^C - y_i \log \tilde{F}^{z_s}_{\theta}(x)_i}_{=:\mathcal{L}_{sCE}(\theta)} + 
    \underbrace{ \sum_{i=1}^C y_i \log D_{\theta_{nc}} (F^{z_n}_{\theta}(x))_i}_{=:\mathcal{L}_{nCE}(\theta, \theta_{nc})}.
\end{align*}
\end{definition}
The underlying principles of the nuisance classification loss $\mathcal{L}_{nCE}$ can be understood using a variational lower bound on mutual information from \cite{barber}. In summary, the minimization is with respect to a lower bound on $I_\mathcal{D}(y; z_n)$, while the maximization aims to tighten the bound (see Lemma \ref{lem:effectBounds} in Appendix \ref{app:infoTheory}). By using these results, we now state the main result under the assumed distribution shift and successful minimization (proof in Appendix \ref{app:proofThm}):
\begin{theorem}[Information $\boldsymbol{I_{\mathcal{D}_{Adv}}(y; z_s)}$ maximal after distribution shift]
\label{thm:effectLoss}
Let $\mathcal{D}_{Adv}$ denote the adversarial distribution and $\mathcal{D}$ the training distribution. Assume $I_{\mathcal{D}}(y; z_n) = 0$ by minimizing $\mathcal{L}_{iCE}$ and the distribution shift satisfies $I_{\mathcal{D}_{Adv}}(z_n; y) \leq I_{\mathcal{D}}(z_n; y)$ and $I_{\mathcal{D}_{Adv}}(y; z_s |z_n) \leq I_{\mathcal{D}_{Adv}}(y; z_s)$. Then,  $$I_{\mathcal{D}_{Adv}}(y; z_s) = I_{\mathcal{D}}(y; x).$$
\end{theorem}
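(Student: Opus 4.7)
The plan is to combine bijectivity of $F_\theta$ with the chain rule of mutual information and sandwich $I_{\mathcal{D}_{Adv}}(y;z_s)$ between $I_{\mathcal{D}_{Adv}}(y;x)$ on both sides, using the two explicit assumptions on $\mathcal{D}_{Adv}$ together with the training-time hypothesis $I_{\mathcal{D}}(y;z_n)=0$. Beyond those, only non-negativity of (conditional) mutual information and the information-preservation identity $I(y;x)=I(y;F_\theta(x))$ for a bijective $F_\theta$ are needed.

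First, bijectivity gives $I_{\mathcal{D}_{Adv}}(y;x)=I_{\mathcal{D}_{Adv}}(y;z_s,z_n)$ under the adversarial law. Combining the first shift assumption with non-negativity of mutual information yields $0\leq I_{\mathcal{D}_{Adv}}(y;z_n)\leq I_{\mathcal{D}}(y;z_n)=0$, hence $I_{\mathcal{D}_{Adv}}(y;z_n)=0$. Applying the chain rule of mutual information in both orders then produces
\[
I_{\mathcal{D}_{Adv}}(y;z_s,z_n)=I_{\mathcal{D}_{Adv}}(y;z_n)+I_{\mathcal{D}_{Adv}}(y;z_s\mid z_n)=I_{\mathcal{D}_{Adv}}(y;z_s\mid z_n)
\]
together with $I_{\mathcal{D}_{Adv}}(y;z_s,z_n)=I_{\mathcal{D}_{Adv}}(y;z_s)+I_{\mathcal{D}_{Adv}}(y;z_n\mid z_s)\geq I_{\mathcal{D}_{Adv}}(y;z_s)$. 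Plugging the second shift assumption into the middle gives the sandwich
\[
I_{\mathcal{D}_{Adv}}(y;x)=I_{\mathcal{D}_{Adv}}(y;z_s\mid z_n)\leq I_{\mathcal{D}_{Adv}}(y;z_s)\leq I_{\mathcal{D}_{Adv}}(y;z_s,z_n)=I_{\mathcal{D}_{Adv}}(y;x),
\]
so every relation is tight and $I_{\mathcal{D}_{Adv}}(y;z_s)=I_{\mathcal{D}_{Adv}}(y;x)$.

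The remaining step, and what I expect to be the main obstacle, is the replacement of $I_{\mathcal{D}_{Adv}}(y;x)$ by $I_{\mathcal{D}}(y;x)$ on the right-hand side of the claim. The two stated shift assumptions alone do not prevent the adversarial distribution from destroying signal between $x$ and $y$ entirely; for instance, a shift under which $x$ and $y$ become independent satisfies both inequalities vacuously and yet drives $I_{\mathcal{D}_{Adv}}(y;x)$ to zero. I would therefore expect the proof to rely on the implicit semantics of an adversarial shift, namely that it merely redistributes predictive information between $z_s$ and $z_n$ without changing the overall task information, i.e.\ an implicit hypothesis $I_{\mathcal{D}_{Adv}}(y;x)=I_{\mathcal{D}}(y;x)$. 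Granted that identification, the theorem is immediate from the sandwich above.
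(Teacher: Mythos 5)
Your argument matches the paper's proof step for step: from the first assumption and non-negativity you get $I_{\mathcal{D}_{Adv}}(y;z_n)=0$; bijectivity of $F_\theta$ gives $I_{\mathcal{D}_{Adv}}(y;x)=I_{\mathcal{D}_{Adv}}(y;z_s,z_n)$; the chain rule plus the no-synergy assumption gives $I_{\mathcal{D}_{Adv}}(y;x)\leq I_{\mathcal{D}_{Adv}}(y;z_s)$; and your chain-rule-plus-non-negativity bound in the other direction is the same as the paper's invocation of the data processing inequality $I_{\mathcal{D}_{Adv}}(y;z_s)\leq I_{\mathcal{D}_{Adv}}(y;x)$, since $z_s$ is a deterministic function of $x$. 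So both you and the paper arrive at $I_{\mathcal{D}_{Adv}}(y;z_s)=I_{\mathcal{D}_{Adv}}(y;x)$.

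The issue you flag in your last paragraph is not a gap in your proof but a genuine gap in the paper's statement: the paper's own proof also terminates at $I_{\mathcal{D}_{Adv}}(y;z_s)=I_{\mathcal{D}_{Adv}}(y;x)$ and then asserts ``the claimed equality must hold,'' even though the theorem's right-hand side carries subscript $\mathcal{D}$, not $\mathcal{D}_{Adv}$. Your counterexample is valid: a shift that makes $x$ and $y$ independent under $\mathcal{D}_{Adv}$ satisfies both stated inequalities trivially yet drives $I_{\mathcal{D}_{Adv}}(y;x)$ to zero, so the stated hypotheses do not imply $I_{\mathcal{D}_{Adv}}(y;x)=I_{\mathcal{D}}(y;x)$. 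Either the theorem's conclusion should read $I_{\mathcal{D}_{Adv}}(y;z_s)=I_{\mathcal{D}_{Adv}}(y;x)$, or the extra hypothesis that the shift preserves the total task information $I(y;x)$ must be added explicitly, exactly as you propose.
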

Thus, incorporating the nuisance classifier allows for the discussed indirect increase of $I_{\mathcal{D}_{Adv}}(y;z_s)$ under an adversarial distribution shift, visualized in Figure \ref{fig:effectLossComp}. 

To aid stability and further encourage factorization of $z_s$ and $z_n$ in practice, we add a maximum likelihood term to our independence cross-entropy objective as
\begin{equation}
\label{eq:mle}
     {\min_\theta \max_{\theta_{nc}} \mathcal{L}(\theta,\theta_{nc}) = \mathcal{L}_{iCE}(\theta,\theta_{nc}) 
     - \underbrace{\sum_{k=1}^{d-C} \log \left( p_k(F^{z_n}_{\theta}(x)_k) |\text{det} (J_{{\theta}}^x)|\right) }_{=:\mathcal{L}_{MLE_n}(\theta)}},
\end{equation}
where $\text{det} (J_{\theta}^x)$ denotes the determinant of the Jacobian of $F_{\theta}(x)$ and $p_k \sim \mathcal{N}(\beta_k, \gamma_k)$ with $\beta_k, \gamma_k$ learned parameter. The log-determinant can be computed exactly in our model with negligible additional cost. Note, that optimizing $\mathcal{L}_{MLE_n}$ on the nuisance variables together with $\mathcal{L}_{sCE}$ amounts to maximum-likelihood under a factorial prior (see Lemma \ref{lem:effectMLE} in Appendix \ref{app:infoTheory}). 

Just as in GANs the quality of the result relies on a tight bound provided by the nuisance classifier and  convergence of the MLE term. Thus, it is important to analyze the success of the objective after training. We do this by applying our metameric sampling attack, but there are also other ways like evaluating a more powerful nuisance classifier after training.

\begin{figure}
    \centering
    \vspace{-1.cm}
    \includegraphics[width=1.\textwidth]{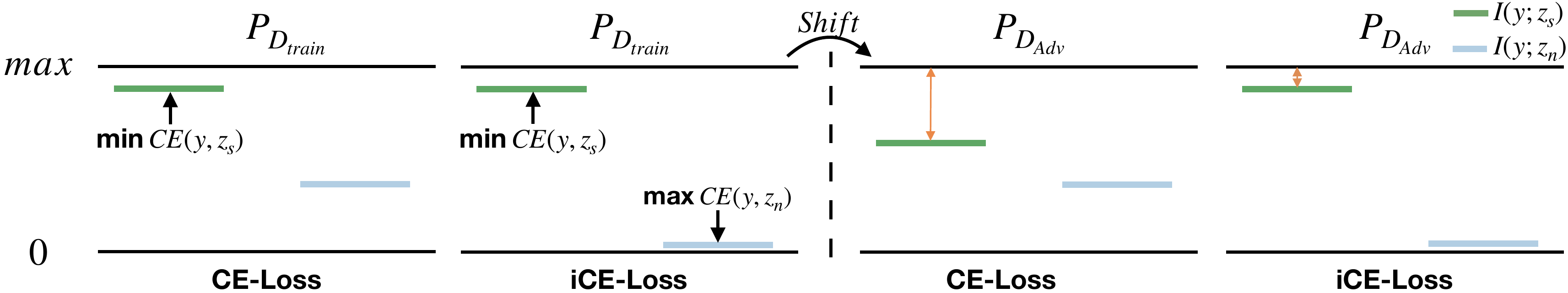}
    \caption{Left: Mutual information under distribution $\mathcal{D}_{train}$, Right: Effect of distributional shift to $\mathcal{D}_{Adv}$. Each case under training with cross-entropy (CE) and independence cross-entropy (iCE). Under distribution $\mathcal{D}$, the iCE-loss minimizes $I(y; z_n)$ (Lemma \ref{lem:effectBounds}, Appendix \ref{app:infoTheory}), but has no effect as the CE-loss already maximizes $I(y;z_s)$. However under the shift to $\mathcal{D}_{Adv}$, the information $I(y;z_s)$ decreases when training only under the CE-loss (orange arrow), while the iCE-loss induces $I(y;z_n)=0$ and thus leaves $I(y;z_s)$ unchanged (Theorem \ref{thm:effectLoss}).}
    \label{fig:effectLossComp}
\end{figure}

\section{Applying Independence Cross-entropy}


In this section, we show that our proposed independence cross-entropy loss is effective in reducing invariance-based vulnerability in practice by comparing it to vanilla cross-entropy training in four aspects: \textbf{(1)} error on train and test set, \textbf{(2)} effect under distribution shift, perturbing nuisances via metameric sampling, \textbf{(3)} evaluate accuracy of a classifier on the nuisance variables to quantify the class-specific information in them and \textbf{(4)} on our newly introduced shiftMNIST, an augmented version of MNIST to benchmark adversarial distribution shifts according to Theorem \ref{thm:effectLoss}. 

For all experiments we use the same network architecture and settings, the only difference being the two additional loss terms as explained in Definition \ref{def:minMaxloss} and equation \ref{eq:mle}. In terms of test error of the logit classifier, both losses perform approximately on par, whereas the gap between train and test error vanishes for our proposed loss function, indicating less overfitting. For classification errors see Table \ref{tab:mnist_errors} in appendix \ref{app:traindetails}.
\begin{figure}
    \centering
    \includegraphics[width=1.\textwidth]{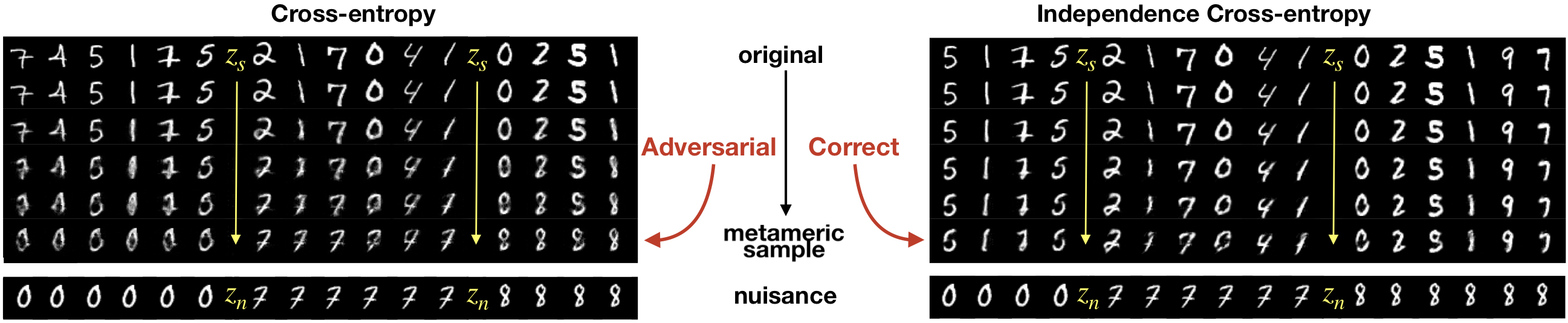}
    \caption{Samples $\tilde{x}=F^{-1}(z_s,\tilde{z}_n)$ with logit activations $z_s$ taken from original image and $\tilde{z}_n$ obtained by linearly interpolating from the original nuisance $z_n$ (first row) to the nuisance of a target example $z_n^*$ (last row upper block). The used target example is shown at the bottom. When training with cross-entropy, virtually any image can be turned into any class without changing the logits $z_s$, illustrating strong vulnerability to invariance-based adversaries. Yet, training with independence cross-entropy solves the problem and interpolations between nuisances $z_n$ and $z_n^*$ preserve the semantic content of the image.}
    \label{fig:interpolation_losses}
\end{figure}

\textbf{Robustness under metameric sampling attack.} To analyze if our proposed loss indeed leads to independence between $z_n$ and labels $y$, we attack it with our metameric sampling procedure. As we are only looking on data samples and not on samples from the model (factorized gaussian on nuisances), this attack should reveal if the network learned to trick the objective. In Figure \ref{fig:interpolation_losses} we show interpolations between original images and logit metamers in CE- and iCE-trained fully invertible RevNets. In particular, we are holding the activations $z_s$ constant, while linearly interpolating nuisances $z_n$ down the column. The CE-trained network allows us to transform any image into any class without changing the logits. However, when training with our proposed iCE, the picture changes fundamentally and interpolations in the pre-image only change the style of a digit, but not its semantic content. This shows our loss has the ability to overcome excessive task-related invariance and encourages the model to explain and separate all task-related variability of the input from the nuisances of the task. \newpage 

A classifier trained on the nuisance variables of the cross-entropy trained model performs even better than the logit classifier. Yet, a classifier on the nuisances of the independence cross-entropy trained model is performing poorly (Table \ref{tab:mnist_errors} in appendix \ref{app:traindetails}). This indicates little class-specific information in the nuisances $z_n$, as intended by our objective function. Note also that this inability of the nuisance classifier to decode class-specific information is not due to it being hard to read out from $z_n$, as this would be revealed by the metameric sampling attack (see Figure \ref{fig:interpolation_losses}).

\begin{figure}[H]
\hspace{-.5cm}
\begin{minipage}{.58\linewidth}
    \includegraphics[width=1.\textwidth]{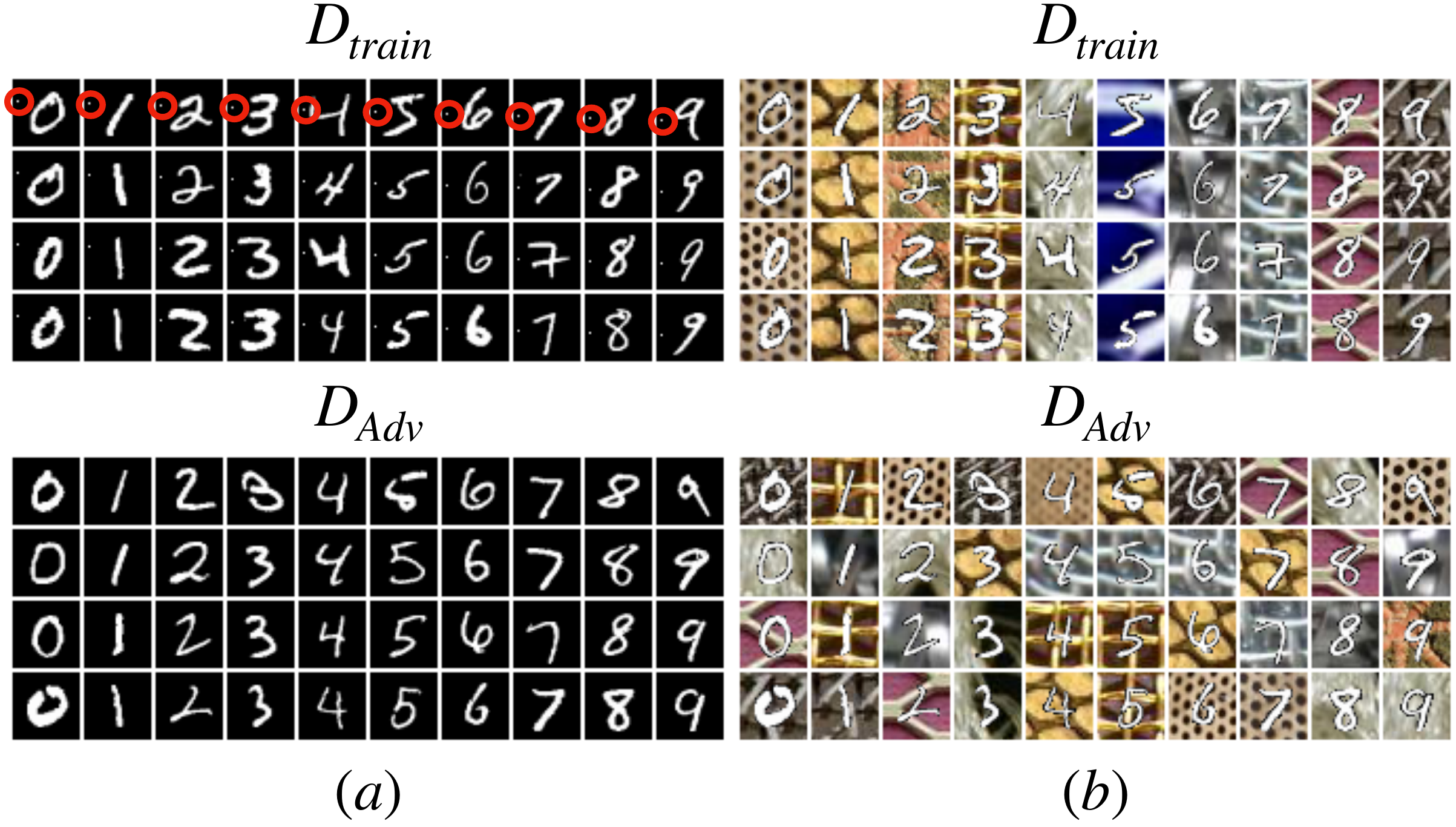}
\end{minipage}%
\hspace{0.1cm}
\begin{minipage}{.38\linewidth}
\begin{tabularx}{\textwidth}{lcc} 
    {\% Error} &  {$\mathbf{D_{Train}}$}& {$\mathbf{D_{Adv}}$}\\ \toprule
    \text{(a) CE  \, ResNet} & 00.00 & 73.80 \\ 
    \text{(a) CE \, fi-RevNet} & 00.00 & 57.09 \\ 
    \text{(a) iCE\,  fi-RevNet} & 00.02 & 34.73 \\ 
	\text{\textbf{(a) Difference}} & \textbf{00.02} & \textbf{38.33} \\ \midrule
    \text{(b) CE \, ResNet} &00.00  & 87.83  \\ 
    \text{(b) CE \, fi-RevNet} & 00.18 & 73.71  \\ 
    \text{(b) iCE\, fi-RevNet} & 00.53 & 59.99 \\     
	\text{\textbf{(b) Difference}} & \textbf{00.53} & \textbf{27.84} \\
\end{tabularx}
\end{minipage}
\caption{shiftMNIST experiments. \textbf{(a):} Binary shiftMNIST, where the class is additionally encoded with a location-based binary code on the left border of the image (highlighted with red circles). The shifted adversarial test distribution does not have the binary class encoding. \textbf{(b):} Texture shiftMNIST, where the class is additionally encoded in background texture type. The texture-class coupling is randomized in the shifted adversarial test distribution. \textbf{Right:} Results of CE-trained ResNet, fully invertible RevNet and iCE-trained fully invertible RevNet. The CE-based models build excessive invariance with respect to the digit identity on $\mathcal{D}_{train}$ and fail on $\mathcal{D}_{Adv}$. \textit{Difference} denotes the largest improvement between CE-trained and iCE-trained model. The iCE model is more resilient to removing informative features, and reduces the error on $\mathcal{D}_{Adv}$ up to 38\%. }
\label{fig:SHIFTMNIST}
\end{figure}

\textbf{shiftMNIST: Benchmarking adversarial distribution shift.}\, To further test the efficacy of our proposed independence cross-entropy, we introduce a simple, but challenging new dataset termed \textit{shiftMNIST} to test classifiers under adversarial distribution shifts $\mathcal{D}_{Adv}$. The dataset is based on vanilla MNIST, augmented by introducing additional, highly predictive features at train time that are randomized or removed at test time. Randomization or removal ensures that there are no synergy effects between digits and planted features under $\mathcal{D}_{Adv}$. This setup allows us to reduce mutual information between category and the newly introduced feature in a targeted manner. \textbf{(a)}
Binary shiftMNIST is vanilla MNIST augmented by coding the category for each digit into a single binary pixel scheme. The location of the binary pixel reveals the category of each image unambigiously, while only minimally altering the image's appearance. 

At test time, the binary code is not present and the network can not rely on it anymore. \textbf{(b)} Textured shiftMNIST introduces textured backgrounds for each digit category which are patches sampled from the describable texture dataset~\citep{cimpoi14describing}. 

At train time the same type of texture is underlayed each digit of the same category, while texture types across categories differ. At test time, the relationship is broken and texture backgrounds are paired with digits randomly, again minimizing the mutual information between background and label in a targeted manner. See Figure \ref{fig:SHIFTMNIST} for examples\footnote{Link to code and dataset: https://github.com/jhjacobsen/fully-invertible-revnet}.

It turns out that this task is indeed very hard for standard classifiers and their tendency to become excessively invariant to semantically meaningful features, as predicted by our theoretical analysis. When trained with cross-entropy, ResNets and fi-RevNets make zero errors on the train set, while having error rates of up to 87\% on the shifted test set. This is striking, given that e.g. in binary shiftMNIST, only one single pixel is removed under $\mathcal{D}_{Adv}$, leaving the whole image almost unchanged. When applying our independence cross-entropy, the picture changes again. The errors made by the network improve by up to almost 38\% on binary shiftMNIST and around 28\% on textured shiftMNIST. This highlights the effectiveness of our proposed loss function and its ability to minimize catastrophic failure under severe distribution shifts exploiting excessive invariance.

\section{Related Work}
\label{relatedwork}

\textbf{Adversarial examples.} Adversarial examples often include $\epsilon$-norm restrictions \citep{szegedy2013intriguing}, while \citep{GilmerMotivating2018} argue for a broader definition to fully capture the implications for security. The $\epsilon$-adversarial examples have also been extended to $\epsilon$-feature adversaries~\citep{sabour2015adversarial}, which are equivalent to our approximate metameric sampling attack. Some works \citep{GANexamples, fawzi2018adversarial} consider unrestricted adversarial examples, which are closely related to invariance-based adversarial vulnerability. The difference to human perception revealed by adversarial examples fundamentally questions which statistics deep networks use to base their decisions \citep{jo2017measuring, madryLunch}.

\textbf{Relationship between standard and bijective networks.} We leverage recent advances in reversible \citep{gomez2017reversible} and bijective networks \citep{jacobsen2018irevnet, ardizzone, kingma2018glow} for our analysis. It has been shown that ResNets and iRevNets behave similarly on various levels of their representation on challenging tasks~\citep{jacobsen2018irevnet} and that iRevNets as well as Glow-type networks are related to ResNets by the choice of dimension splitting applied in their residual blocks~\citep{grathwohl2018ffjord}. Perhaps unsurprisingly, given so many similarities, ResNets themselves have been shown to be provably bijective under mild conditions~\citep{behrmann2018invertible}. Further, excessive invariance of the type we discuss here has been shown to occur in non residual-type architectures as well~\citep{gilmer2018adversarial,jensRelu}. For instance, it has been observed that up to 60\% of semantically meaningful input dimensions on the adversarial spheres problem are learned to be ignored, while retaining virtually perfect performance~\citep{gilmer2018adversarial}. In summary, there is ample evidence that RevNet-type networks are closely related to ResNets, while providing a principled framework to study widely observed issues related to excessive invariance in deep learning in general and adversarial robustness in particular.


\textbf{Information theory.} The information-theoretic view has gained recent interest in machine learning due to the information bottleneck \citep{tishby2015deep, shwartz2017opening, alemiDeepBottleneck} and usage in generative modelling \citep{infoGan, deepInfoamx}. As a consequence, the estimation of mutual information \citep{barber, brokenElbo, achille2017emergence, mine} has attracted growing attention. The concept of group-wise independence between latent variables goes back to classical independent subspace analysis \citep{hyvarinen2000emergence} and received attention in learning unbiased representations, e.g. see the Fair Variational Autoencoder \citep{Louizos2015TheVF}. Furthermore, extended cross-entropy losses via entropy terms \citep{pereyra} or minimizing predictability of variables \citep{Schmidhuber} has been introduced for other applications. Our proposed loss also shows similarity to the GAN loss \citep{goodfellow2014generative}. However, in our case there is no notion of real or fake samples, but exploring similarities in the optimization are a promising avenue for future work.

\section{Conclusion}

Failures of deep networks under distribution shift and their difficulty in out-of-distribution generalization are prime examples of the limitations in current machine learning models. The field of adversarial example research aims to close this gap from a robustness point of view. While a lot of work has studied $\epsilon$-adversarial examples, recent trends extend the efforts towards the unrestricted case. However, adversarial examples with no restriction are hard to formalize beyond testing error. We introduce a reverse view on the problem to: (1) show that a major cause for adversarial vulnerability is excessive invariance to semantically meaningful variations, (2) demonstrate that this issue persists across tasks and architectures; and (3) make the control of invariance tractable via fully-invertible networks.


In summary, we demonstrated how a bijective network architecture enables us to identify large adversarial subspaces on multiple datasets like the adversarial spheres, MNIST and ImageNet. Afterwards, we formalized the distribution shifts causing such undesirable behavior via information theory. Using this framework, we find one of the major reasons is the insufficiency of the vanilla cross-entropy loss to learn semantic representations that capture all task-dependent variations in the input. We extend the loss function by components that explicitly encourage a split between semantically meaningful and nuisance features. Finally, we empirically show that this split can remove unwanted invariances by performing a set of targeted invariance-based distribution shift experiments.

\section{Acknowledgements}
We thank Ryota Tomioka for spotting a mistake in the proof for Theorem 6. We thank thank the anonymous reviewers, Ricky Chen, Will Grathwohl and Jesse Bettencourt for helpful comments on the manuscript. We gratefully acknowledge the financial support from the German Science Foundation for the CRC 1233 on "Robust Vision" and RTG 2224 "$\pi^3$: Parameter Identification - Analysis, Algorithms, Applications". This work was supported by the German Federal Ministry of Education and Research (BMBF): T\"ubingen AI Center, FKZ: 01IS18039A.

\bibliography{iclr2019_conference}
\bibliographystyle{iclr2019_conference}

\appendix

\section{Semantic and Nuisance Variation on Adversarial Spheres}
\label{app:advSpheres}

\begin{example}[Semantic and nuisance on Adversarial Spheres \citep{gilmer2018adversarial}]
Consider classifying inputs $x$ from two classes given by radii $R_1$ or $R_2$. Further, let $(r, \phi)$ denote the spherical coordinates of $x$. Then, any perturbation $\Delta x$, $x^* = x + \Delta x$ with $r^* \neq r$ is semantic. On the other hand, if $r^* = r$ the perturbation is a nuisance with respect to the task of discriminating two spheres.
\end{example}
In this example, the max-margin classifier $D(x) = sign\left(\|x\| - \frac{R_1+R_2}{2}\right)$ is invariant to any nuisance perturbation, while being only sensitive to semantic perturbations. In summary, the transform to spherical coordinates allows to linearize semantic and nuisance perturbations. Using this notion, invariance-based adversarial examples can be attributed to perturbations of $x^*=x + \Delta x$ with following two properties
\begin{enumerate}
    \item Perturbed sample $x^*$ stays in the pre-image $\{x^*\in \mathbb{R}^d \mid D(x^*)= D(x)\}$ of the classifier
    \item Perturbation $\Delta x$ is semantic, as $o(x)\neq o(x + \Delta x)$.
\end{enumerate}
Thus, the failure of the classifier $D$ can be thought of a mis-alignment between its invariance (expressed through the pre-image) and the semantics of the data and task (expressed by the oracle).
\begin{example}[Mis-aligned classifier on Adversarial Spheres]
\label{ex:subOptimalCl}
Consider the classifier
\begin{align}
\label{eq:subOptClassifier}
    D(x) = sign\left(\|x_{1, \dots, d-1}\| - \frac{R_1+R_2}{2}\right),
\end{align}
which computes the norm of $x$ from its first $d-1$ cartesian-coordinates. Then, $D$ is invariant to a semantic perturbation with $\Delta r = R_2 - R_1$ if only changes in the last coordinate $x_d$ are made.
\end{example}
We empirically evaluate the classifier in \eqref{eq:subOptClassifier} on the spheres problem (10M/2M samples setting~\citep{gilmer2018adversarial}) and validate that it can reach perfect classification accuracy. However, by construction, perturbing the invariant dimension $x^*_d = x_d + \Delta x_d$ allows us to move all samples from the inner sphere to the outer sphere. Thus, the accuracy of the classifier drops to chance level when evaluating its performance under such a distributional shift. \\
To conclude, this underlines how classifiers with optimal performance on finite samples can exhibit non-intuitive failure modes due to excessive invariance with respect to semantic variations.

\section{Approximate Gradient-based Metameric Samples}
\label{app:optimMet}

We use a standard Imagenet pre-trained Resnet-154 as provided by the torchvision package~\citep{paszke2017automatic} and choose a logit percept $\mathbf{y} = G(\mathbf{x})$ that can be based on any seed image. Then we optimize various images $\tilde{x}$ to be metameric to $\mathbf{x}$ by simply minimizing a mean squared error loss of the form: 
\begin{equation}
    \mathcal{L}_{\text{MSE}}(G(x),G(\tilde{x})) = \frac{1}{K} \sum_{k=1}^K (G(x)_k-G(\tilde{x})_k)^2
\end{equation}
in the 1000-dimensional semantic logit space via stochastic gradient descent. We optimize with Adam in Pytorch default settings and a learning rate of 0.01 for 3000 iterations. The optimization thus takes the form of an adversarial attack targeting all logit entries and with no norm restriction on the input distance. Note that our metameric sampling attack in bijective networks is the analytic reverse equivalent of this attack. It leads to the exact solution at the cost of one inverse pass instead of an approximate solution here at the cost of thousands of gradient steps.

\subsection{Additional Batch of Metameric Samples}

\begin{figure}[H]
\begin{center}
\includegraphics[width=\textwidth]{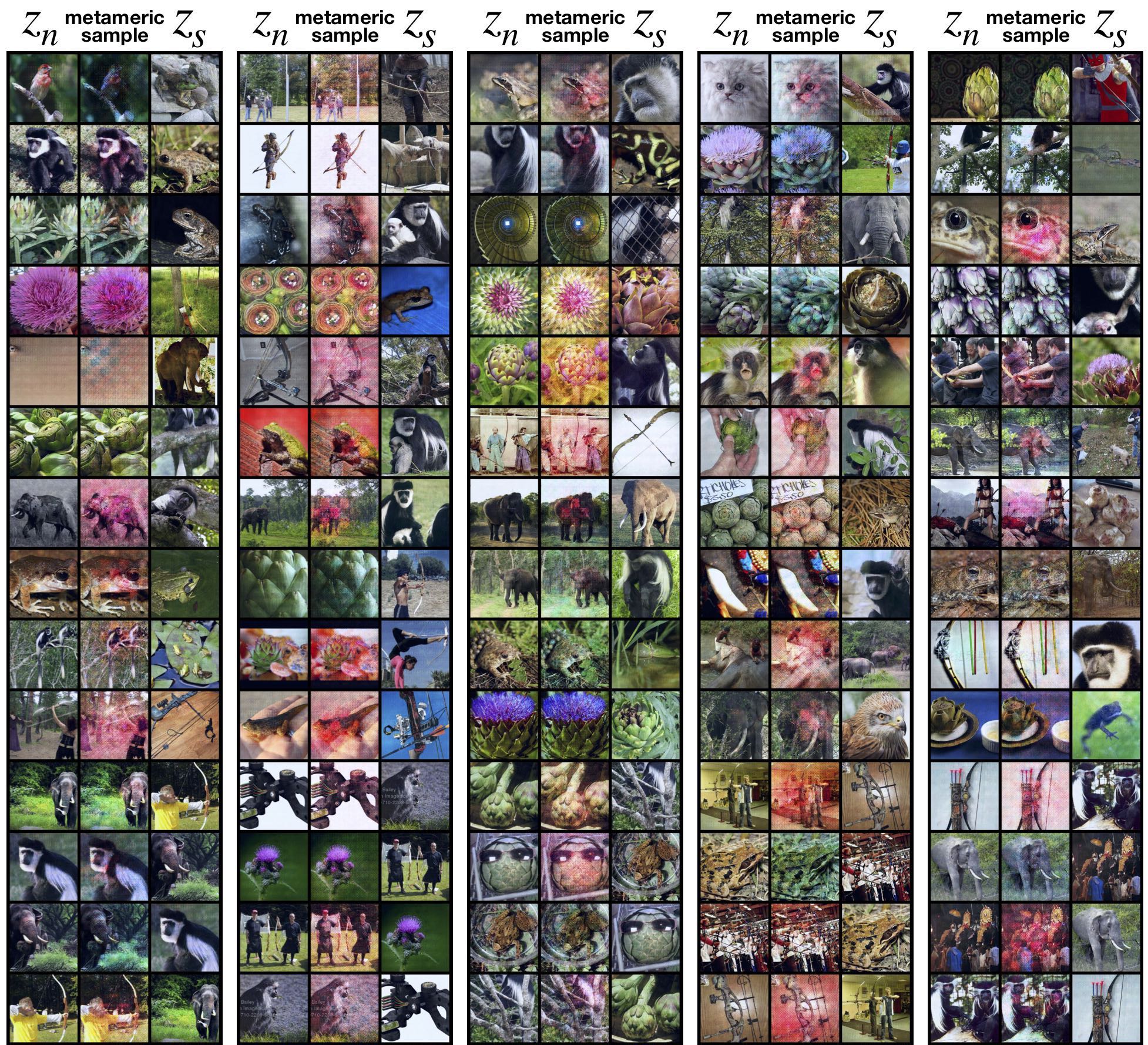}
\caption{Here we show a batch of randomly sampled metamers from our ImageNet-trained fully invertible RevNet-48. The quality is generally similar, sometimes colored artifacts appear.}
\label{Fig:batch_metamers}
\end{center}
\end{figure}

\section{Information Theory}
\label{app:infoTheory}
Computing mutual information is often intractable as it requires the joint probability $p(x,y)$, see \citep{cover} for an extensive treatment of information theory. However, following variational lower bound can be used for approximation, see \citep{barber}.
\begin{lemma}[Variational lower bound on mutual information]
\label{lem:varBoundMI}
Let $X,Y$ be random variables with conditional density $p(y|x)$. Further, let $q_\theta(y|x)$ be a variational density depending on parameter $\theta$. Then, the lower bound
\begin{align*}
    I(Y; X) = h(Y) - h(Y|X) &= h(Y) + \mathbb{E}_X \mathbb{E}_{Y|X} \log q_\theta(y|x) + \mathbb{E}_X \infdivx{p(y|x)}{q_\theta(y|x)}\\
    & \geq h(Y) + \mathbb{E}_X \mathbb{E}_{Y|X} \log q_\theta(y|x)
\end{align*}
holds with equality if $p(y|x) = q_\theta(y|x)$.
\end{lemma}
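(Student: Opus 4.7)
The plan is to follow the standard Barber--Agakov derivation: rewrite $I(Y;X) = h(Y) - h(Y|X)$ by unfolding the conditional entropy, insert the variational density $q_\theta(y|x)$ by the trivial identity $\log p(y|x) = \log q_\theta(y|x) + \log\frac{p(y|x)}{q_\theta(y|x)}$ inside the double expectation, and recognize the residual term as a (conditional) Kullback--Leibler divergence whose non-negativity supplies the claimed lower bound.

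Concretely, first I would expand $-h(Y|X) = \mathbb{E}_X \mathbb{E}_{Y|X} \log p(y|x)$ and then split the integrand via the add-and-subtract trick to obtain
\[
-h(Y|X) = \mathbb{E}_X \mathbb{E}_{Y|X} \log q_\theta(y|x) + \mathbb{E}_X \mathbb{E}_{Y|X} \log \tfrac{p(y|x)}{q_\theta(y|x)}.
\]
By definition of the Kullback--Leibler divergence applied pointwise in $x$, the last term equals $\mathbb{E}_X\, D_{KL}(p(y|x)\,\|\,q_\theta(y|x))$. Adding $h(Y)$ to both sides reproduces exactly the equality line of the lemma. The inequality then follows from Gibbs' inequality: $D_{KL}(p(y|x)\,\|\,q_\theta(y|x)) \geq 0$ for every $x$, so its expectation over $X$ is also non-negative, and dropping it preserves the direction of the inequality. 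Equality in Gibbs' inequality holds iff $p(y|x) = q_\theta(y|x)$ almost surely under $p(\cdot|x)$, so a $P_X$-a.s.\ match of the two conditional densities is sufficient for tightness, as claimed.

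There is no genuinely hard step: the only technical subtlety is the standard absolute-continuity hypothesis that $q_\theta(y|x) > 0$ wherever $p(y|x) > 0$, so that $\log q_\theta(y|x)$ is integrable against $p(y|x)$ and the KL divergence is well-defined and finite. I would adopt this as a background regularity assumption, consistent with the treatment in Cover \& Thomas that the paper already cites. The only real ``move'' in the argument is the $\log q_\theta - \log q_\theta$ insertion, which converts the intractable entropy term $h(Y|X)$ into the tractable variational expectation $\mathbb{E}_X \mathbb{E}_{Y|X} \log q_\theta(y|x)$ plus a non-negative, parameter-dependent gap that vanishes exactly when the variational family recovers the true posterior.
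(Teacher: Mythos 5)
Your proposal is correct and is exactly the standard Barber--Agakov argument that the paper implicitly relies on (it cites Barber \& Agakov rather than proving the lemma, and the equality line in the lemma statement already encodes the decomposition you derive): expand $-h(Y|X)$, insert $\log q_\theta$ by adding and subtracting, identify the residual as $\mathbb{E}_X D_{KL}(p(y|x)\,\|\,q_\theta(y|x)) \ge 0$, and note the tightness condition. Nothing is missing; your remark on absolute continuity of $q_\theta$ with respect to $p$ is the right regularity caveat.
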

While above lower bound removes the need for the computation of $p(y|x)$, estimating the expectation $\mathbb{E}_{Y|X}$ still requires sampling from it. Using this bound, we can now state the effect of the nuisance classifiation loss.
\begin{lemma}[Effect of nuisance classifier]
\label{lem:effectBounds}
Define semantics as $z_s = F_{\theta}(x)_{1, \dots, C}$ and nuisances as $z_n = F_{\theta}(x)_{C+1, \dots, d}$, where $(x, y) \sim \mathcal{D}$. Then, the nuisance classification loss yields
\begin{enumerate}[(i)]
    \item \textbf{Minimization of lower bound on $\boldsymbol{I_{\mathcal{D}}(y; z_n)}$:} $\theta^{*} = \argmin_{\theta} \mathcal{L}_{nCE}(\theta, \theta_{nc}^*)$ minimizes $I_{\theta_{nc}^*}(y; z_n)$, where 
    $I_{\theta_{nc}^*}(y; z_n) \leq I_{\mathcal{D}}(y; z_n)$
    and  $\theta_{nc}^{*} = \argmax_{\theta_2} \mathcal{L}_{nCE}(\theta, \theta_{nc})$. 
    \item \textbf{Maximization to tighten bound on $\boldsymbol{I_{\mathcal{D}}(y; z_n)}$:} Under a perfect model of the conditional density, $D_{\theta_{nc}^{*}}(z_n) = p(y|z_n)$, it holds
    $I_{\theta_{nc}^{*}}(y; z_n) = I_{\mathcal{D}}(y; z_n).$
    \end{enumerate}
\end{lemma}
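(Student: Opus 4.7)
The plan is to recognize both parts as direct applications of Lemma \ref{lem:varBoundMI}. I instantiate that lemma with $Y = y$ (the label), $X = z_n = F^{z_n}_\theta(x)$ (the nuisance component of the encoded representation), and variational conditional $q_{\theta_{nc}}(y \mid z_n) := D_{\theta_{nc}}(z_n)$. Since $(x,y) \sim \mathcal{D}$ and $z_n$ is a deterministic function of $x$, the push-forward distribution of $(z_n, y)$ is well-defined and Lemma \ref{lem:varBoundMI} gives
$I_{\mathcal{D}}(y; z_n) \geq h(y) + \mathbb{E}_{(x,y)\sim \mathcal{D}}\bigl[\log D_{\theta_{nc}}(F^{z_n}_\theta(x))_y\bigr]$.
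Using the one-hot form of $y$, the right-hand side equals $h(y) + \mathcal{L}_{nCE}(\theta, \theta_{nc})$ (interpreted as an expectation over the data distribution), so defining $I_{\theta_{nc}}(y; z_n) := h(y) + \mathcal{L}_{nCE}(\theta, \theta_{nc})$ immediately gives the family of lower bounds and the inequality asserted in part (i).

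For part (i), the inner maximization $\theta_{nc}^{*} \in \argmax_{\theta_{nc}} \mathcal{L}_{nCE}(\theta, \theta_{nc})$ selects, for the current encoder $\theta$, the variational posterior in the family that tightens this bound the most. The entropy $h(y)$ depends only on the fixed label marginal of $\mathcal{D}$ and is therefore constant in both $\theta$ and $\theta_{nc}$; consequently the outer problem $\theta^{*} = \argmin_\theta \mathcal{L}_{nCE}(\theta, \theta_{nc}^{*})$ coincides term-for-term with $\argmin_\theta I_{\theta_{nc}^{*}}(y; z_n)$, which is exactly the claim.

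For part (ii), I would invoke the equality case of Lemma \ref{lem:varBoundMI}: the gap between $I_\mathcal{D}(y; z_n)$ and $I_{\theta_{nc}^{*}}(y; z_n)$ equals $\mathbb{E}_{z_n} \infdiv{p(y\mid z_n)}{q_{\theta_{nc}^{*}}(y\mid z_n)}$, which vanishes pointwise under the stated assumption $D_{\theta_{nc}^{*}}(z_n) = p(y \mid z_n)$, yielding $I_{\theta_{nc}^{*}}(y; z_n) = I_\mathcal{D}(y; z_n)$. The proof requires no saddle-point or minimax-interchange argument because the variational inequality holds pointwise in $(\theta, \theta_{nc})$, so we can sequentially (first inner, then outer) read off the two statements. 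The only bookkeeping that deserves care — and the one step I would spell out explicitly — is verifying that $h(y)$ is genuinely inert in both optimizations; this is the hinge that turns the equivalence between minimizing $\mathcal{L}_{nCE}$ and minimizing $I_{\theta_{nc}^{*}}(y; z_n)$ into a tautology rather than an approximation.
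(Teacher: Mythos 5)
Your proposal is correct and follows essentially the same route as the paper: both instantiate the Barber--Agakov bound (Lemma \ref{lem:varBoundMI}) with $q_{\theta_{nc}}(y\mid z_n)=D_{\theta_{nc}}(z_n)$, identify $\mathcal{L}_{nCE}$ (in expectation, up to the constant $h(y)$) with the bound, and use the equality/KL-gap case for part (ii). Your pushforward remark that the expectation over $(z_n,y)$ can be taken over $(x,y)\sim\mathcal{D}$ because $z_n$ is a deterministic function of $x$ is exactly the content of the paper's Markov-chain computation $y \leftrightarrow x \leftrightarrow z_n$, just stated more compactly.
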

\begin{proof}
To proof above result, we need to draw the connection to the variational lower bound on mutual information from Lemma \ref{lem:varBoundMI}. Let the nuisance classifier $D_{\theta_{nc}}(z_n)$ model the variational posterior $q_{\theta_{nc}}(y|z_n)$. Then we have the lower bound
\begin{align}
\label{eq:lowerBound}
    I(y; z_n) \geq h(y) + \mathbb{E}_{z_n} \mathbb{E}_{y|z_n} \log D_{\theta_{nc}}(z_n) =: I_{\theta_{nc}}(y; z_n).
\end{align}
From Lemma \ref{lem:varBoundMI} follows, that if $D_{\theta_{nc}}(z_n)= p(y|z_n)$, it holds $I(y; z_n)=I_{\theta_{nc}}(y; z_n)$. Hence, the nuisance classifier needs to model the conditional density perfectly.\\
Estimating this bound via Monte Carlo simulation requires sampling from the conditional density $p(y\vert z_n)$. Following \citep{alemiDeepBottleneck}, we have the Markov property $y \leftrightarrow x \leftrightarrow z_n$ as labels $y$ interact with inputs $x$ and representation $z_n$ interacts with inputs $x$. Hence,
\begin{align*}
    p(y\vert z_n)p(z_n) &= p(y,z_n) \\
    &= \int_\mathcal{X} p(x,y,z_n) dx \\
    &= \int_\mathcal{X} p(z_n \vert x,y) p(y \vert x) p(x) dx \\
    &=  \int_\mathcal{X} p(z_n \vert x)  p(y \vert x) p(x) dx\\
    &= \mathbb{E}_x [p(z_n \vert x) p(y \vert x)].
\end{align*}
Including above and assuming $F_{\theta}(x) = z_n$ to be a deterministic function, we have
\begin{align*}
    \mathbb{E}_{z_n} \mathbb{E}_{y|z_n} \log D_{\theta_{nc}}(z_n) = \mathbb{E}_x \mathbb{E}_{y\vert x} \mathbb{E}_{z_n\vert x}  \log D_{\theta_{nc}}(z_n) = \mathbb{E}_x \mathbb{E}_{y\vert x}  \log D_{\theta_{nc}}(z_n).
\end{align*}
\end{proof}

\begin{lemma}[Effect of MLE-term]
\label{lem:effectMLE}
Define semantics as $z_s = F_{\theta}(x)_{1, \dots, C}$ and nuisances as $z_n = F_{\theta}(x)_{C+1, \dots, d}$, where $(x, y) \sim \mathcal{D}$. Then, the MLE-term in \eqref{eq:mle} together with cross-entropy on the semantics
\begin{align*}
    \theta^* = \argmin_{\theta} \mathcal{L}_{sCE}(\theta) + \mathcal{L}_{MLE_n}(\theta)
\end{align*}
 minimizes the mutual information $I(z_s; z_n)$.
\end{lemma}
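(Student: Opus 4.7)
The plan is to recognize $\mathcal{L}_{sCE}(\theta)+\mathcal{L}_{MLE_n}(\theta)$ as the negative log-likelihood of $(x,y)$ under a joint model that by construction renders the nuisance coordinates $z_n$ independent of $(y,z_s)$, and then to argue that minimizing the loss forces the data-induced distribution on $(z_s,z_n)$ to inherit this independence, collapsing $I(z_s;z_n)$ to zero.

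Concretely, I would combine the two terms into
$$\mathcal{L}_{sCE}(\theta)+\mathcal{L}_{MLE_n}(\theta) = -\mathbb{E}_{(x,y)\sim\mathcal{D}}\bigl[\log\bigl(\softmax(z_s)_y\, p_n(z_n)\, |\det J_\theta^x|\bigr)\bigr],$$
and introduce the joint model $p_\theta(x,y)\propto \softmax(z_s(x))_y\cdot p_n(z_n(x))\cdot |\det J_\theta^x|$. Applying the change of variables $x\mapsto z=(z_s,z_n)$ converts this to $p_\theta(y,z_s,z_n)\propto \softmax(z_s)_y\cdot p_n(z_n)$, which factorizes as a function of $(y,z_s)$ times the factorial prior $p_n(z_n)$, so under the model $z_n\perp(y,z_s)$ by construction. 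Since the loss equals $\mathrm{KL}(p_\mathcal{D}(x,y)\,\|\,p_\theta(x,y))$ up to data entropy and a normalization constant, an $\argmin$ $\theta^*$ drives this KL toward zero; under sufficient flow capacity $p_{\theta^*}=p_\mathcal{D}$, so the data-induced joint on $(y,z_s,z_n)$ inherits $z_n\perp(y,z_s)$, and marginalizing out $y$ yields $z_n\perp z_s$, i.e.\ $I(z_s;z_n)=0$, the non-negative minimum.

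The main obstacle will be the implicit improper uniform prior on $z_s$: the normalization $Z=\int\mathrm{d}z_s$ diverges on $\mathbb{R}^C$, so $p_\theta(x,y)$ is formally not a probability density. I would handle this either by restricting $z_s$ to a bounded region and passing to the limit, or by adjoining an arbitrary proper prior $p_s(z_s)$ to obtain $p_\theta(y,z_s,z_n)\propto \softmax(z_s)_y\, p_s(z_s)\, p_n(z_n)$, which still exhibits the essential $z_n\perp(y,z_s)$ factorization that drives the conclusion. A secondary caveat is the standard universal-approximation assumption on the bijective family $F_\theta$, without which the KL need not be driven exactly to zero.
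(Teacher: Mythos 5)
Your proposal follows essentially the same route as the paper's proof: both view $\mathcal{L}_{sCE}+\mathcal{L}_{MLE_n}$ as negative log-likelihood under a joint model whose prior factorizes across the semantic and nuisance blocks, and both conclude that minimizing this loss pushes the learned representation toward $z_s\perp z_n$. The paper phrases the factorial prior directly in terms of $\tilde{z}_s = \softmax(z_s)$ and $z_n$, writing $p(\tilde{z}_s,z_n)=\mathrm{Cat}(\tilde{z}_s)\prod_k p_k(z_n)_k$, and then invokes shift-invariance of softmax to pass from independence in $\tilde{z}_s$ to independence of $z_s$ ``up to a constant''; you instead write the model as $p_\theta(y,z_s,z_n)\propto \softmax(z_s)_y\, p_n(z_n)$, marginalize out $y$, and appeal to the loss being a KL divergence driven to zero. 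Your version is a little more explicit about the maximum-likelihood $\Rightarrow$ matched-distribution step (via KL and a universal-approximation caveat), and it surfaces the improper-uniform-prior issue on $z_s$ that the paper suppresses by working with $\tilde{z}_s$ and a ``categorical prior''; in exchange, the paper's softmax shift-invariance remark directly addresses the slack between independence of $\tilde{z}_s$ and independence of $z_s$, which your argument absorbs silently into the choice of improper prior. Either way the core insight — combined loss $=$ MLE under a factorized prior, hence encourages block-independence — is the same.
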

\begin{proof}
Let $\tilde{z}_s = softmax(z_s)$. Then minimizing the loss terms $\mathcal{L}_{sCE}$ and $\mathcal{L}_{MLE_n}$ is a maximum likelihood estimation under the factorial prior
\begin{align}
    p(\tilde{z}_s, z_n) &= p(\tilde{z}_s) p(z_n) \\
    &= Cat((\tilde{z}_s)_1, \ldots, (\tilde{z}_s)_C) \prod_{k=1}^{d-C} p_k(z_n)_k,
\end{align}
where $Cat$ is a categorical distribution. As $softmax$ is shift-invariant, $softmax(x + c) = softmax(x)$, above factorial prior for $\tilde{z}_s$ and $z_n$ yields independence between logits $z_s$ and $z_n$ up to a constant $c$. Finally note, the $log$ term and summation in $\mathcal{L}_{MLE_n}$ and $\mathcal{L}_{CE}$ is re-formulation for computational ease but does not change its minimizer as the logarithm is monotone. 
\end{proof}

\subsection{Proof of Theorem \ref{thm:effectLoss}}
\label{app:proofThm}
From the assumptions follows $I_{\mathcal{D}_{Adv}}(y; z_n) = 0$. Furthermore, we have the assumption
\begin{align*}
    I_{\mathcal{D}_{Adv}}(y; z_s | z_n) \leq I_{\mathcal{D}_{Adv}}(z_s; y),
\end{align*}    
excluding synergetic effects in the interaction information \citep{Ghassami17interaction}. By information preservation under homeomorphisms \citep{kraskow} and the chain rule of mutual information \citep{cover}, we have
\begin{align*}
    I_{\mathcal{D}_{Adv}}(y ; x) &=  I_{\mathcal{D}_{Adv}}(y ; z_s, z_n)\\
    &=  I_{\mathcal{D}_{Adv}}(y; z_n) + I_{\mathcal{D}_{Adv}}(y; z_s | z_n) \\
    &\leq I_{\mathcal{D}_{Adv}}(y; z_s).
\end{align*}
As $z_s = F(x)_{1, \ldots, C}$ is obtained by the deterministic transform $F$, by the data processing inequality \citep{cover} we have the inequality $I_{\mathcal{D}_{Adv}}(y; x) \geq I_{\mathcal{D}_{Adv}}(y; z_s)$. Thus, the claimed equality must hold.

\subsection{Mutual information bounded}
\begin{remark}
Since our goal is to maximize the mutual information $I(y; z_s)$ while minimizing $I(y; z_n)$, we need to ensure that this objective is well defined as mutual information can be unbounded from above for continuous random variables. However, due to the data processing inequality \citep{cover} we have $I(y; z_n) = I(y; F_{\theta}(x)) \leq I(y; x)$. Hence, we have a fixed upper bound given by our data $(x,y)$. Compared to \citep{mine} there is thus no need for gradient clipping or a switch to the bounded Jensen-Shannon divergence as in \citep{deepInfoamx} is not necessary.
\end{remark}

\section{Training and Architectural Details}
\label{app:traindetails}

All experiments were based on a fully invertible RevNet model with different hyperparameters for each dataset. For the spheres experiment we used Pytorch \citep{paszke2017automatic} and for MNIST, as well as Imagenet Tensorflow \citep{abadi2016tensorflow}.

\subsection{Spheres experiments}

The network is a fully connected fully invertible RevNet. It has 4 RevNet-type ReLU bottleneck blocks with additive couplings and uses no batchnorm. We train it via cross-entropy and use the Adam optimizer \citep{adamaxadam} with a learning rate of 0.0001 and otherwise default Pytorch settings. The nuisance classifier is a 3 layer ReLU network with 1000 hidden units per layer.

We choose the spheres to be 100-dimensional, with $R_1=1$ and $R_2=10$, train on 500k samples for 10 epochs and then validate on another 100k holdout set. We achieve 100\% train and validation accuracy for logit and nuisance classifier.

\subsection{MNIST experiments}

We use a convolutional fully invertible RevNet with additional actnorm and invertible 1x1 convolutions between each layer as introduced in \cite{kingma2018glow}. The network has 3 stages, after which half of the variables are factored out and an invertible downsampling, or squeezing \citep{dinh2016density,jacobsen2018irevnet} is applied. The network has 16 RevNet blocks with batch norm per stage and 128 filters per layer. We also dequantize the inputs as is typically done in flow-based generative models. 

The network is trained via Adamax \citep{adamaxadam} with a base learning rate of 0.001 for 100 epochs and we multiply the it with a factor of 0.2 every 30 epochs and use a batch size of 64 and l2 weight decay of 1e-4. For training we compare vanilla cross-entropy training with our proposed independence cross-entropy loss. To have a more balanced loss signal, we normalize $\mathcal{L}_{nCE}$ by the number of input dimensions it receives for the maximization step. The nuisance classifier is a fully-connected 3 layer ReLU network with 512 units. As data-augmentation we use random shifts of 3 pixels. For classification errors of the different architectures we compare, see Table \ref{tab:mnist_errors}.

\begin{table}
\begin{centering}
\begin{tabularx}{\textwidth}{ccccccc} 
    {MNIST} &  {\textbf{SOTA}}& {\textbf{LeNet}} & {\textbf{CE}} &{\textbf{iCE} (ours)} &  {\textbf{CE}} & {\textbf{iCE} (ours)}  \\ \toprule
     {Readout} & {Logit} &{Logit} & {Logit}& {Logit} & {Nuisance} & {Nuisance}  \\ \midrule
    \text{\% Test Error} & 0.21 & 1.70 & 0.39   & 0.38      & 0.34  & 27.70\\ 
    \text{\% Train Error} & \text{-} & \text{-}& 0.00   & 0.37  &  0.00 & 40.21 \\ 
\end{tabularx}
\caption{Results comparing cross-entropy training (CE) with independence cross-entropy training (iCE) from Definition \ref{def:minMaxloss} and two architectures from the literature. The accuracy of the logit classifiers is on par for the CE and iCE networks, but the train error is higher for CE compared to test error, indicating less overfitting for iCE. Further, a classifier independently trained on the nuisance variables is able to reach even smaller error than on the logits for CE, but just 27.70\% error for iCE, indicating that we have successfully removed most of the information of the label from the nuisance variables and fixed the problem of excessive invariance to semantically meaningful variability with no cost in test error.}
\label{tab:mnist_errors}
\end{centering}
\end{table}

\subsection{ImageNet experiments}

We use a convolutional fully invertible RevNet with 4 stages, 4 RevNet blocks per stage and invertible downsampling after each stage, as well as two invertible downsamplings on the input of the network. The first three stages consist of additive and the last of affine coupling layers. After the final layer we apply an orthogonal 2D DCT type-II to all feature maps and read out the classes in the low-pass components of the transformation. This effectively gives us an invertible global average pooling and makes our network even more similar to ResNets, that always apply global average pooling on their final feature maps. 
We train the network with momentum SGD for 128 epochs, a batch size of 480 (distributed to 6 GPUs), a base learning rate of 0.1, which is reduced by a factor of 0.1 every 32 epochs. We apply momentum of 0.9 and l2 weight decay of 1e-4.

\end{document}